\newcolumntype{H}{>{\setbox0=\hbox\bgroup}c<{\egroup}@{}}
\theoremstyle{plain}
\newtheorem{theorem}{Theorem}
\newtheorem{definition}[theorem]{Definition}
\newcommand{\R}{\mathbb{R}}
\newcommand{\dist}{\mathcal{D}}
\DeclareMathOperator*{\Exp}{\mathbb{E}}
\newcommand{\clf}{f_{\texttt{clf}}}
\newcommand{\fdet}{f_{\texttt{det}}}
\icmltitlerunning{Detecting Adversarial Examples Is (Nearly) As Hard As Classifying Them}
\begin{document}

\twocolumn[
\icmltitle{Detecting Adversarial Examples Is (Nearly) As Hard As Classifying Them}

\begin{icmlauthorlist}
	\icmlauthor{Florian Tramèr}{g,s}
\end{icmlauthorlist}

\icmlaffiliation{s}{Work done while the author was at Stanford University}
\icmlaffiliation{g}{Google Research}

\icmlcorrespondingauthor{Florian Tramèr}{tramer@cs.stanford.edu}

\icmlkeywords{Adversarial examples, detection, hardness}

\vskip 0.3in
]

\printAffiliationsAndNotice{}

\begin{abstract}
Making classifiers robust to adversarial examples is challenging. 
Thus, many works tackle the seemingly easier task of \emph{detecting} perturbed inputs.

We show a barrier towards this goal. We prove a \emph{hardness reduction} between detection and classification of adversarial examples: given a robust detector for attacks at distance $\epsilon$ (in some metric), we show how to build a similarly robust (but computationally inefficient) \emph{classifier} for attacks at distance $\epsilon/2$.

Our reduction is \emph{computationally inefficient}, but preserves the \emph{sample complexity} of the original detector. The reduction thus cannot be directly used to build practical classifiers.
Instead, it is a useful sanity check to test whether empirical detection results imply something much stronger than the authors presumably anticipated (namely a highly robust and data-efficient \emph{classifier}).

To illustrate, we revisit $14$ empirical detector defenses published over the past years. For $12/14$ defenses, we show that the claimed detection results imply an inefficient classifier with robustness far beyond the state-of-the-art.


\end{abstract}

\section{Introduction}
Building models that are robust to adversarial examples~\citep{szegedy2013intriguing, biggio2013evasion} is a major challenge and open problem in machine learning.
Due to the inherent difficulty in building robust \emph{classifiers}, researchers have attempted to build techniques to at least \emph{detect} adversarial examples, a weaker task that is largely considered easier than robust classification~\citep{xu2017feature, pang2021adversarial, sheikholeslami2021provably}.

Yet, evaluating the robustness of detector defenses is challenging. This is in part due to a lack of strong evaluation guidelines and benchmarks---akin to those developed for robust classifiers~\citep{carlini2019evaluating, croce2020robustbench}---as well as to a lack of long-standing comparative baselines such as adversarial training~\citep{madry2018towards}.
As a result, when a new detection defense is proposed, it can be hard to judge if the claimed empirical results are credible.

To illustrate this challenge, consider the following (fictitious) claims about two defenses against adversarial examples on CIFAR-10: 
\begin{itemize}
	\item defense A is a classifier that claims robust accuracy of $90\%$ under $\ell_\infty$-perturbations bounded by $\epsilon=\sfrac{\bf{4}}{255}$; 
	\item defense B also has a ``rejection'' option, and claims robust accuracy of $90\%$ under $\ell_\infty$-perturbations bounded by $\epsilon=\sfrac{\bf{8}}{255}$ (we say that defense B is robust for some example if it classifies that example correctly, and either rejects/detects or correctly classifies all perturbed examples at distance $\epsilon$.)
\end{itemize}

\emph{Which of these two (empirical) claims are you more likely to believe to be correct?}

Defense A claims much higher robustness than the current best results achieved with adversarial training~\citep{madry2018towards, rebuffi2021fixing}, the only empirical defense against adversarial examples that has stood the test of time. Indeed, the state-of-the-art $\ell_\infty$ robustness for $\epsilon=\sfrac{4}{255}$ on CIFAR-10 (without external data) is $\approx 79\%$~\citep{rebuffi2021fixing}.
Thus, the claim of defense A is likely to be met with initial skepticism and heightened scrutiny, as could be expected for such a claimed breakthrough result.

The claim of defense B is harder to assess, due to a lack of long-standing baselines for robust detectors (many detection defenses have been shown to be broken~\citep{carlini2017adversarial, tramer2020adaptive}). 
On one hand, detection of adversarial examples has largely been considered to be an easier task than classification~\citep{xu2017feature, pang2021adversarial, sheikholeslami2021provably}. On the other hand, defense B claims robustness to perturbations that are twice as large as defense A ($\epsilon=\sfrac{\bf{8}}{255}$ vs. $\epsilon=\sfrac{\bf{4}}{255}$).

\emph{In this paper, we show that the claims of defenses A and B are, in fact, equivalent! (up to computational efficiency.)}

We prove a general \emph{hardness reduction} between classification and detection of adversarial examples. Given a detector defense that achieves robust risk $\alpha$ for attacks at distance $\epsilon$ (under any metric), we show how to build an \emph{explicit but inefficient} classifier that achieves robust risk $\alpha$ for classifying attacks at distance $\epsilon/2$. The reverse implication also holds: a classifier robust at distance $\epsilon/2$ implies an explicit but inefficient robust detector at distance $\epsilon$. Thus, we prove that robust classification and detection are \emph{equivalent}, up to computational complexity and a factor $2$ in the robustness radius.

The reader might argue that a computationally inefficient reduction may be of limited use. That is, maybe we should not be surprised by the possibility of a breakthrough in robust classification that leverages inefficient methods. Yet, we do not yet know of any way of leveraging computational \emph{inefficiency} to build more robust models. To the contrary, there is growing evidence that the robustness of classifiers is limited by the availability of \emph{data}~\cite{schmidt2018adversarially, carmon2019unlabeled, uesato2019labels, zhai2019adversarially} rather than by \emph{compute}.
Since our reduction does preserve the \emph{sample complexity} of the original detector model, 
we thus argue that---given our current knowledge---we should be as ``surprised'' by the claim made by defense B above as by the claim made by defense A.

Our reduction thus provides a way of assessing the ``plausibility'' of new robust detection claims, by contrasting them with results from the more mature literature on robust classification.
To illustrate, we revisit 14 published detection defenses across three datasets, and show that in 12/14 cases the defense's robust detection claims actually imply a classifier with robustness far superior to the current state-of-the-art. Yet, none of these detection papers make the claim that their techniques should imply such a breakthrough in robust \emph{classification}. 

This situation is reminiscent of an informal principle laid out by Scott Aaronson to recognize likely-to-be-incorrect mathematical breakthroughs: 
\begin{quote}
``\emph{The approach seems to yield something much stronger and maybe even false (but the authors never discuss that).}''~\citep{aaronson}
\end{quote}
Using our reduction, it is obvious that many detector defenses are implicitly claiming much stronger robustness than we believe feasible with current techniques. And indeed, subsequent re-evaluations of many of these defenses resulted in complete breaks~\citep{carlini2017adversarial, tramer2020adaptive}. 

Remarkably, we find that for \emph{certified} defenses, the state-of-the-art results for provable robust classification and detection perfectly match the results implied by our reduction. For example, \citet{sheikholeslami2021provably} recently proposed a certified detector on CIFAR-10 with provable robust error that is within $3\%$ of the provable error of the \emph{inefficient} detector obtained by combining our result with
the state-of-the-art robust classifier of \citet{zhang2019towards}.

In summary, we prove that giving classifiers access to a detection option does not help robustness (or at least, not much). Our work provides, to our knowledge, the first example of a hardness reduction between different approaches for robust machine learning. As in the case of computational complexity, we believe that such reductions can be useful for identifying research questions or areas that are unlikely to bear fruit (bar a significant breakthrough)---so that the majority of the community's efforts can be redirected elsewhere.

On a technical level, our reduction exposes a natural connection between robustness and error correcting codes, which may be of independent interest.

\section{Hardness Reductions Between Robust Classifiers and Detectors}
\label{sec:theory}

In this section, we prove our main result: a reduction between robust detectors and robust classifiers. We first introduce some useful notation and define the (robust) risk of classifiers with and without a detection option.

\subsection{Preliminaries}
We consider a classification task with a distribution $\dist$ over examples $x \in \R^n$ with labels $y \in [C]$. 
A classifier is a function $\clf: \R^n \to [C]$.
A detector $\fdet$ is a classifier with an extra ``rejection'' or "detection" option $\bot$, that indicates the absence of a classification.
We assume for simplicity that classifiers and detectors are deterministic. Our results can easily be extended to randomized functions as well.
The binary indicator function $\mathbbm{1}_{\{A\}}$ is $1$ if and only if the predicate $A$ is true.

We first define a classifier's \emph{risk}, i.e., its classification error on unperturbed samples.

\begin{definition}[Risk]
	Let $f$ be either a classifier $\clf: \mathbb{R}^n \to [C]$ or a classifier with detection $\fdet: \mathbb{R}^n \to [C] \cup \{\bot\}$. The risk of $f$ is the expected rate at which $f$ fails to correctly classify a sample:
	\begin{equation}
		R(f) \coloneqq \Exp_{(x, y) \sim \dist} \left[\mathbbm{1}_{\left\{f(x) \neq y\right\}}\right]
	\end{equation}
\end{definition}

Note that for a detector, rejecting an unperturbed example sampled from the distribution $\dist$ is counted as an error.

For classifiers without a rejection option, we define the \emph{robust risk} as the risk on worst-case adversarial inputs~\citep{madry2018towards}. For an input $x$ sampled from $\dist$, an adversarial example $\hat{x}$ is constrained to being within distance $d(x, \hat{x}) \leq \epsilon$ from $x$, where $d$ is some metric.

\begin{definition}[Robust risk]
	Let $\clf: \mathbb{R}^n \to [C]$ be a classifier.
	The robust risk at distance $\epsilon$ is:
	\begin{equation}
		R_{\text{adv}}^\epsilon(\clf) \coloneqq \Exp_{(x, y) \sim \dist} \left[ \max_{d(x, \hat{x}) \leq \epsilon} \mathbbm{1}_{\left\{\clf(\hat{x}) \neq y\right\}}\right]
	\end{equation}
\end{definition}

Thus, a sample $(x, y)$ is robustly classified if and only if every point within distance $\epsilon$ of $x$ (including $x$ itself) is correctly classified as $y$.

For a detector (a classifier with an extra detection/rejection output), we similarly define the robust risk with detection. The classifier is now allowed to reject adversarial inputs.

\begin{definition}[Robust risk with detection]
	\label{def:risk_det}
	Let $\fdet: \mathbb{R}^n \to [C] \cup \{\bot\}$ be a classifier with an extra detection output $\bot$.
	The robust risk with detection at distance $\epsilon$ is:
	\begin{align}
	&R_{\text{adv-det}}^\epsilon(\fdet) \coloneqq \nonumber\\ 
	&\quad\quad\Exp_{(x, y) \sim \dist} \left[ \max_{d(x, \hat{x}) \leq \epsilon} \mathbbm{1}_{\left\{\fdet(x) \neq y \ \lor\ \fdet(\hat{x}) \notin \{y, \bot\}\right\}}\right]\hspace{-3pt}
	\end{align}
\end{definition}

That is, a detector defense $\fdet$ is robust on a sample $x$ from the distribution if and only if the defense classifies the clean input $x$ correctly, and the defense either rejects or correctly classifies every perturbed input $\hat{x}$ within distance $\epsilon$ from $x$. The requirement that the defense correctly classify unperturbed examples eliminates pathological defenses that reject all inputs.

\subsection{Robust Detection Implies Inefficient Robust Classification}
We are now ready to introduce our main result, a reduction from a robust detector for adversarial examples at distance $\epsilon$, to an inefficient robust classifier at distance $\epsilon/2$. We later prove that this reduction also holds in the reverse direction, thereby demonstrating the equivalence between robust detection and classification---up to computational hardness.

\begin{theorem}[$\epsilon$-robust detection implies inefficient $\epsilon/2$-robust classification]
	\label{thm:det_to_cls}
	Let $d(\cdot, \cdot)$ be an arbitrary metric.
	Let $\fdet$ be a detector that achieves risk $R(\fdet)=\alpha$, and robust risk with detection $R_{\text{adv-det}}^\epsilon(\fdet) = \beta$. Then, we can construct an explicit (but inefficient) classifier $\clf$ that achieves risk $R(\clf) \leq \alpha$ and robust risk $R_{\text{adv}}^{\epsilon/2}(\clf) \leq \beta$.
	
	The classifier $\clf$ is constructed as follows on a (possibly perturbed) input $\hat{x}$:
	\begin{itemize}
		\item Run the detector model $y \gets \fdet(\hat{x})$. If the input is not rejected, i.e., $y \neq \bot$, then output the label $y$ that was predicted by the detector.
		\item Otherwise, find an input $x'$ within distance $\epsilon/2$ of $\hat{x}$ that is not rejected, i.e., $d(\hat{x}, x') \leq  \epsilon/2$ and $\fdet(x') \neq \bot$. If such an input $x'$ exists, output the label $y \gets \fdet(x')$. Else, output a uniformly random label $y \in [C]$.
	\end{itemize}
	\vspace{0.25em}
\end{theorem}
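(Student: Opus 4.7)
The plan is to verify the two risk bounds on $\clf$ separately and to isolate the triangle-inequality step that forces the factor of $2$. For the clean risk bound, I would observe that for any labeled point $(x,y)$, the event $\{\clf(x) \neq y\}$ is contained in $\{\fdet(x) \neq y\}$: either $\fdet(x) \in [C]$, in which case $\clf(x) = \fdet(x)$, and $\clf(x) \neq y$ implies $\fdet(x) \neq y$; or $\fdet(x) = \bot$, which is itself counted as a detector error (rejecting a clean sample). Taking expectations gives $R(\clf) \leq R(\fdet) = \alpha$.

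For the robust bound, I would prove the stronger pointwise statement: if $(x,y)$ is in the ``good'' event for the detector, meaning $\fdet(x) = y$ and $\fdet(x'') \in \{y, \bot\}$ for every $x''$ with $d(x, x'') \leq \epsilon$, then $\clf(\hat{x}) = y$ for every $\hat{x}$ with $d(x, \hat{x}) \leq \epsilon/2$. This containment of ``good'' events immediately gives $R_{\text{adv}}^{\epsilon/2}(\clf) \leq R_{\text{adv-det}}^\epsilon(\fdet) = \beta$.

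I would establish the pointwise claim by a short case split on whether $\fdet(\hat{x}) = \bot$. If $\fdet(\hat{x}) \neq \bot$, then because $d(x, \hat{x}) \leq \epsilon/2 \leq \epsilon$, robustness of the detector forces $\fdet(\hat{x}) \in \{y, \bot\}$, hence $\fdet(\hat{x}) = y$, and $\clf$ returns $y$ at its first step. If $\fdet(\hat{x}) = \bot$, the classifier searches for some $x'$ with $d(\hat{x}, x') \leq \epsilon/2$ and $\fdet(x') \neq \bot$. Such an $x'$ is guaranteed to exist because $x$ itself qualifies: $d(\hat{x}, x) \leq \epsilon/2$ and $\fdet(x) = y \neq \bot$. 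So the fallback random label is never reached. For the $x'$ that the search actually returns, the triangle inequality gives $d(x, x') \leq d(x, \hat{x}) + d(\hat{x}, x') \leq \epsilon/2 + \epsilon/2 = \epsilon$, so robustness of the detector forces $\fdet(x') \in \{y, \bot\}$, and combined with $\fdet(x') \neq \bot$ this yields $\fdet(x') = y$, hence $\clf(\hat{x}) = y$.

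The only real crux is this triangle-inequality step: it is exactly what converts the detector's $\epsilon$-radius guarantee into a classifier guarantee at radius $\epsilon/2$, and explains the factor of $2$ in the theorem statement. Every other step is bookkeeping about how the detection option $\bot$ and the definitions of $R$ and $R_{\text{adv-det}}^\epsilon$ interact; in particular, no probabilistic argument is needed beyond taking expectations of the established pointwise containments of error events.
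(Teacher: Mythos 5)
Your proof is correct and takes essentially the same approach as the paper: both hinge on the triangle inequality $d(x,x') \leq d(x,\hat{x}) + d(\hat{x},x') \leq \epsilon$ to lift the detector's $\epsilon$-radius guarantee to a classifier guarantee at $\epsilon/2$, with the paper arguing by contradiction over the three branches of $\clf$'s algorithm while you argue the contrapositive (good event for $\fdet$ implies good event for $\clf$) with a cleaner two-way split on whether $\fdet(\hat{x}) = \bot$. Your observation that $x$ itself witnesses a non-rejected point within $\epsilon/2$ of $\hat{x}$ (so the random fallback is never reached under the good event) is a slightly more transparent way to dispatch the case the paper handles as its third bullet.
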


An intuitive illustration for our construction, and for the proof of the theorem (see below) is in \Cref{fig:intuition}. 

Our construction can be viewed as an analog of \emph{minimum distance decoding} in coding theory. We can view a clean data point $x$ sampled from $\dist$ as a codeword, and an adversarial example $\hat{x}$ as a noisy message with a certain number of errors (where the error magnitude is measured using an arbitrary metric on $\R^n$ rather than the Hamming distance that is typically used for error correcting codes). A standard result in coding theory states that if a code can \emph{detect} $\alpha$ errors, then it can \emph{correct} $\alpha/2$ errors. This result follows from a ``ball-packing'' argument: if $\alpha$ errors can be detected, then any two valid codewords must be at least at distance $\alpha$ from each other, and therefore $\alpha/2$ errors can be corrected via minimum distance decoding.

\begin{figure}[t]
	\centering
	\includegraphics[width=0.8\columnwidth]{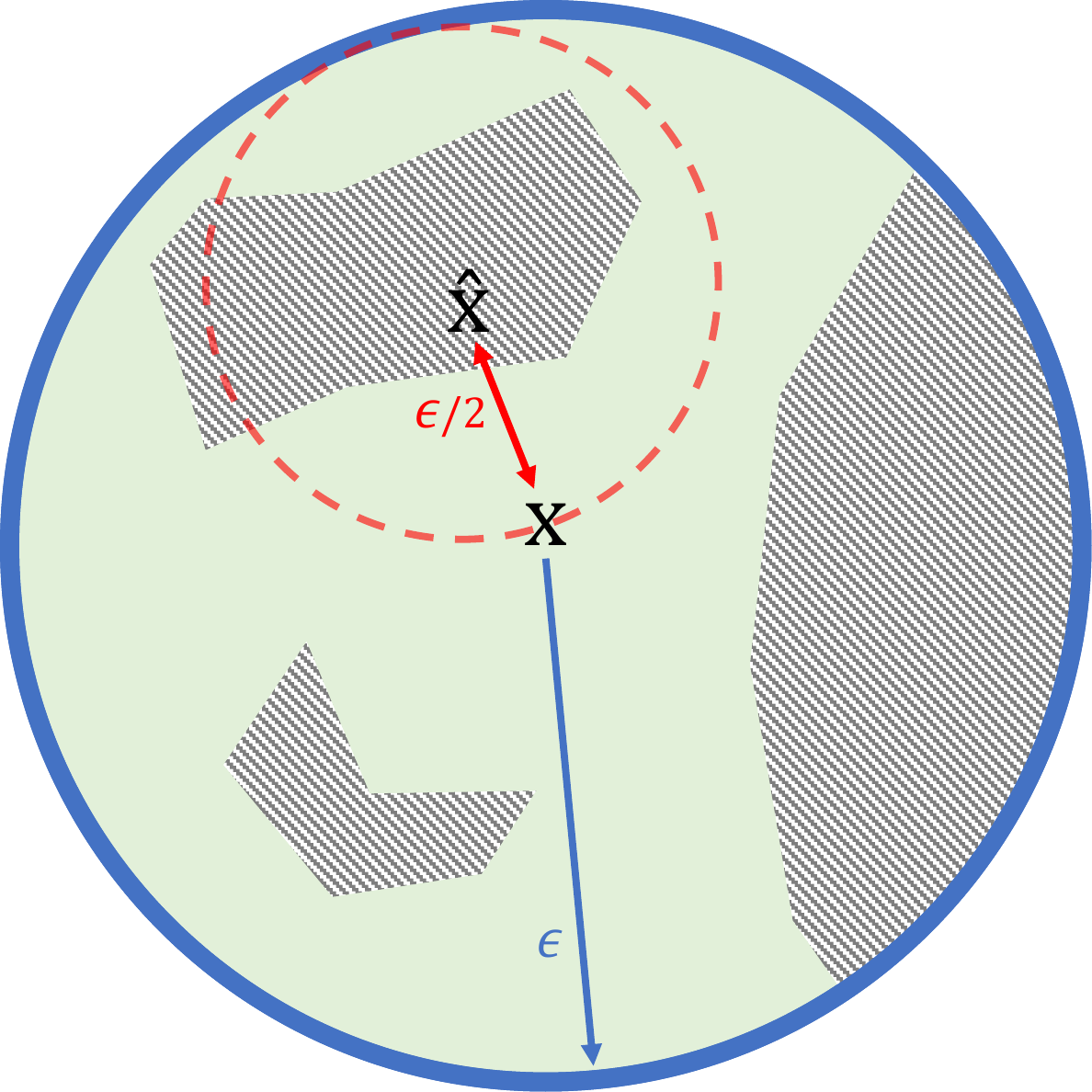}
	\caption{Illustration of the construction of a robust classifier from a robust detector in \Cref{thm:det_to_cls}. The outer blue circle represents all inputs at distance $\epsilon$ from the input $x$. For a detector $\fdet$, the areas in green correspond to correctly classified inputs, and hatched gray areas correspond to rejected inputs. The detector $\fdet$ is thus robust on $x$ up to distance $\epsilon$. The classifier $\clf$ classifies a perturbed input $\hat{x}$, at distance $\epsilon/2$ from $x$, by finding any input within distance  $\epsilon/2$ from $\hat{x}$ (the red dashed circle) that is not rejected by $\fdet$. Such an input necessarily exists and is correctly labeled by $\fdet$. The classifier $\clf$ is thus robust on $x$ up to distance $\epsilon/2$.}
	\label{fig:intuition}
\end{figure}

\begin{proof}[Proof of \Cref{thm:det_to_cls}]
	First, note that the natural accuracy of our constructed classifier $\clf$ is at least as high as that of the detector $\fdet$, since $\clf$ mimics the output of $\fdet$ whenever $\fdet$ does not reject an input sampled from $\dist$. Thus, $R(\clf) \leq R(\fdet) = \alpha$.
	
	Now, for the sake of contradiction, consider an input $(x, y) \sim \dist$ for which the constructed classifier $\clf$ is not robust at distance $\epsilon/2$. By construction, this means that there exists some input $\hat{x}$ at distance $\epsilon/2$ from $x$ such that $\hat{x}$ is misclassified, i.e., $\clf(\hat{x}) = \hat{y} \neq y$. We will show that the detector $\fdet$ is not robust with detection for $x$ either (for attacks at distance up to $\epsilon$).
	
	By definition of the classifier $\clf$, if $\clf(\hat{x}) = \hat{y} \neq y$ then either:
	\begin{itemize}[leftmargin=12pt]
		\item The detector $\fdet$ also misclassifies $\hat{x}$, i.e., $\fdet(\hat{x}) = \hat{y}$.
		
		So $\fdet$ is not robust with detection for $x$ at distance $\epsilon$.
		
		\item There exists an input $x'$ within distance $\epsilon/2$ of $\hat{x}$, such that the detector $\fdet$ misclassifies $x'$, i.e. $\fdet(x') = \hat{y}$.
		
		Note that by the triangular inequality, $d(x, x') \leq d(x, \hat{x}) + d(\hat{x}, x') \leq \epsilon/2 + \epsilon/2 = \epsilon$, and thus $\fdet$ is not robust with detection for $x$ at distance $\epsilon$.
		
		\item The detector $\fdet$ rejects all inputs $x'$ within distance $\epsilon/2$ of $\hat{x}$ (and thus $\clf$ sampled an output label at random).
		
		Since $d(x, \hat{x}) \leq \epsilon/2$, this implies that the detector also rejects the clean input $x$, i.e., $\fdet(x) = \bot$, and thus $\fdet$ is not robust with detection for $x$.
	\end{itemize}

	In summary, whenever the constructed classifier $\clf$ fails to robustly classify an input $x$ up to distance $\epsilon/2$, the detector $\fdet$ also fails to robustly classify $x$ with detection up to distance $\epsilon$.
	Taking expectations over the entire distribution $\dist$ concludes the proof.
\end{proof}

The classifier $\clf$ constructed in \Cref{thm:det_to_cls} is computationally inefficient. Indeed, the second step of the defense consists in finding a non-rejected input within some metric ball. If the original detector $\fdet$ is a non-convex function (e.g., a deep neural network), then this step consists in solving an intractable non-convex optimization problem.
Our reduction is thus typically not suitable for building a practical robust classifier. Instead, it demonstrates the existence of an inefficient but \emph{explicitly instantiable} robust classifier. We discuss the implications of this result more thoroughly in \Cref{sec:experiments}.

A corollary to our reduction is that many ``information theoretic'' results about robust classifiers can be directly extended to robust detectors. For example, \citet{tsipras2019robustness} prove that there exists a formal tradeoff between a classifier's clean accuracy and robust accuracy for certain natural tasks. Since their result applies to \emph{any} classifier (including inefficient ones), combining their result with our reduction implies that a similar accuracy-robustness tradeoff exists for detectors.
More precisely, \citet{tsipras2019robustness} show that for certain classification tasks and suitable choices of parameters $\alpha, \beta, \epsilon$, any classifier $\clf$ which achieves risk $R(\clf) \leq \alpha$  must have robust risk at least $R_{\text{adv}}^\epsilon(\clf) \geq \beta$ against $\ell_\infty$-perturbations bounded by $\epsilon$. By our reduction, this implies that any detector $\fdet$ with risk at most $R(\fdet) \leq \alpha$ must also have robust risk with detection at least $R_{\text{adv-det}}^{\epsilon/2}(\fdet) \geq \beta$ against $\ell_\infty$-perturbations bounded by $\epsilon/2$.

Similar arguments can be applied to show, for instance, that the increased sample complexity of robust generalization from~\citet{schmidt2018adversarially}, or the tradeoff between robustness to multiple perturbation types from~\citet{tramer2019adversarial}, also apply to robust detectors.

Our reduction does not apply to ``computational'' hardness results that have been shown for robust classification. For example, \citet{garg2020adversarially} and \citet{bubeck2018adversarial} show (``unnatural'') distributions where learning a robust classifier is computationally hard---under standard cryptographic assumptions. We cannot use \Cref{thm:det_to_cls} to conclude that learning a robust \emph{detector} is hard for these distributions, since the existence of such a detector would only imply an inefficient robust classifier which does not contradict the results of \citet{garg2020adversarially} or \citet{bubeck2018adversarial}.

\subsection{Robust Classification Implies Inefficient Robust Detection}
A similar argument to the proof of \Cref{thm:det_to_cls} can be used in the opposite direction, to show that a robust classifier at distance $\epsilon/2$ implies an inefficient robust detector at distance $\epsilon$.
Taken together, \Cref{thm:det_to_cls} and \Cref{thm:cls_to_det} show that robust detection and classification are \emph{equivalent}, up to a factor $2$ in the norm bound and up to computational constraints.

\begin{theorem}[$\epsilon/2$ robust-classification implies inefficient $\epsilon$-robust detection]
	\label{thm:cls_to_det}
	Let $d(\cdot, \cdot)$ be an arbitrary metric.
	Let $\clf$ be a classifier that achieves robust risk $R_{\text{adv}}^{\epsilon/2}(\clf) = \beta$. Then, we can construct an explicit (but inefficient) detector $\fdet$ that achieves risk $R(\fdet) \leq \beta$ and robust risk with detection $R_{\text{adv-det}}^\epsilon (\fdet) \leq \beta$.
	
	The detector $\fdet$ is constructed as follows on a (possibly perturbed) input $\hat{x}$:
	\begin{itemize}
		\item Run the classifier $y \gets \clf(\hat{x})$.
		\item Find a perturbed input $x'$ withing distance $\epsilon/2$ of $\hat{x}$ that is classified differently, i.e., $d(\hat{x}, x') \leq \epsilon/2$ and $\clf(x') \neq y$. If such an input $x'$ exists, reject the input and output $\bot$. Else, output the class $y$.
	\end{itemize}
\end{theorem}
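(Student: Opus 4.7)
The plan is to mirror the structure of the proof of \Cref{thm:det_to_cls}, arguing by contrapositive: whenever $\clf$ is $\epsilon/2$-robust on a sample $(x, y)$, the constructed $\fdet$ is $\epsilon$-robust with detection on $(x, y)$. Taking expectations over $\dist$ then gives $R_{\text{adv-det}}^\epsilon(\fdet) \leq \beta$, and since $R(\fdet) \leq R_{\text{adv-det}}^\epsilon(\fdet)$ holds unconditionally (a clean-accuracy error is a special case of a robust-detection failure, using $\hat{x}=x$), the bound $R(\fdet) \leq \beta$ will then come for free.

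First I would handle the clean input. If $\clf$ is $\epsilon/2$-robust on $(x, y)$, then $\clf(x') = y$ for every $x'$ with $d(x, x') \leq \epsilon/2$. Running the detector on $\hat{x} = x$ therefore finds no witness $x'$ with $\clf(x') \neq \clf(x) = y$, so $\fdet(x) = y$.

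Next I would take an arbitrary perturbed input $\hat{x}$ with $d(x, \hat{x}) \leq \epsilon$ and show $\fdet(\hat{x}) \in \{y, \bot\}$. Suppose $\fdet(\hat{x}) \neq \bot$ and set $y' := \clf(\hat{x})$. By construction of $\fdet$, no $x' \in B(\hat{x}, \epsilon/2)$ witnesses $\clf(x') \neq y'$; equivalently, $\clf \equiv y'$ on the ball $B(\hat{x}, \epsilon/2)$. The key step is then to exhibit a ``midpoint'' $x_m$ lying in $B(x, \epsilon/2) \cap B(\hat{x}, \epsilon/2)$: on the one hand $\clf(x_m) = y$ by the $\epsilon/2$-robustness of $\clf$ on $(x, y)$, and on the other hand $\clf(x_m) = y'$ by the constancy just established, forcing $y' = y$ and hence $\fdet(\hat{x}) = y$.

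The main obstacle is this midpoint argument: unlike the forward direction in \Cref{thm:det_to_cls}, where the triangle inequality handles everything, here we genuinely need two $\epsilon/2$-balls with centers at distance $\leq \epsilon$ to intersect. In the standard setting of $\R^n$ equipped with an $\ell_p$-norm, the obvious choice $x_m = (x + \hat{x})/2$ satisfies $d(x, x_m) = d(\hat{x}, x_m) = d(x, \hat{x})/2 \leq \epsilon/2$, so the argument goes through; more generally any length/geodesic metric suffices, which covers all settings of interest for adversarial examples. Once the pointwise implication is established, taking expectations over $(x, y) \sim \dist$ yields $R_{\text{adv-det}}^\epsilon(\fdet) \leq R_{\text{adv}}^{\epsilon/2}(\clf) = \beta$, completing the proof.
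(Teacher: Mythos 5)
Your proof is essentially the same as the paper's: the paper argues directly (if $\fdet$ fails at distance $\epsilon$, then $\clf$ fails at distance $\epsilon/2$), while you argue the contrapositive, but the content is identical---a non-rejection $\fdet(\hat{x}) = y' \neq \bot$ forces $\clf \equiv y'$ on $B(\hat{x}, \epsilon/2)$, and a midpoint $x^* \in B(x, \epsilon/2) \cap B(\hat{x}, \epsilon/2)$ then yields a contradiction with the $\epsilon/2$-robustness of $\clf$ on $x$.

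One thing you did \emph{more} carefully than the paper is flag the midpoint requirement. The paper's proof simply asserts ``there exists a point $x^*$ that is at distance at most $\epsilon/2$ from both $\hat{x}$ and $x$'' even though the theorem is stated for an arbitrary metric $d$, and this assertion is \emph{not} a consequence of the metric axioms. For example, under a discrete metric scaled so that $d(u,v) = \epsilon$ for all $u \neq v$, the balls $B(x, \epsilon/2)$ and $B(\hat{x}, \epsilon/2)$ are disjoint singletons; one can then take $\clf$ with $\clf(x) = y$ but $\clf(\hat{x}) \neq y$, which is $\epsilon/2$-robust on $(x,y)$ even though the constructed $\fdet = \clf$ is not $\epsilon$-robust with detection. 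So the ``arbitrary metric'' hypothesis is slightly too strong for this direction (unlike \Cref{thm:det_to_cls}, which needs only the triangle inequality), and your restriction to $\ell_p$ norms on $\R^n$ or, more generally, geodesic/length metrics is exactly the right fix.
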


\begin{proof}[Proof of \Cref{thm:cls_to_det}]
	Note that for any input $(x, y)$ for which the classifier $\clf$ is robust at distance $\epsilon/2$, no input $x'$ above exists and so $\fdet(x) = y$. Thus, the risk of $\fdet$ is at most the robust risk of $\clf$, so $R(\fdet) \leq \beta$.
	
	Now, consider an input $(x, y) \sim \dist$ for which $\fdet$ is not robust with detection at distance $\epsilon$. That is, either $\fdet(x) \neq y $, or there exists an input $\hat{x}$ at distance $d(x, \hat{x}) \leq \epsilon$ such that $\fdet(\hat{x}) = \hat{y} \notin \{y, \bot\}$. We will show that the classifier $\clf$ is not robust for $x$ either (for attacks at distance up to $\epsilon/2$.) 
	
	If $\fdet(x) \neq y $, then by the same argument as above it cannot be the case that the classifier $\clf$ is robust at distance $\epsilon/2$ for $x$. 
	
	So let us consider the case where $\fdet(\hat{x}) = \hat{y} \notin \{y, \bot\}$.
	By the definition of $\fdet$, this means that for all $x'$ at distance at most $\epsilon/2$ from $\hat{x}$, we have $\clf(x') = \hat{y}$. But, note that there exists a point $x^*$ that is at distance at most $\epsilon/2$ from both $\hat{x}$ and $x$. Since we must have $\clf(x^*) = \hat{y}$, we conclude that $\clf$ is not robust at distance $\epsilon/2$ for $x$. 
	
	Taking expectations over the distribution $\dist$ concludes the proof.
\end{proof}

A main distinction between \Cref{thm:det_to_cls} and \Cref{thm:cls_to_det} is that the construction in \Cref{thm:det_to_cls} preserves clean accuracy, but the construction in \Cref{thm:cls_to_det} does not. The constructed robust detector in  \Cref{thm:cls_to_det} has \emph{clean} accuracy that is equal to the robust classifier's \emph{robust} accuracy.

The construction in \Cref{thm:cls_to_det} can be efficiently (but approximately) instantiated by a \emph{certifiably robust} classifier~\citep{wong2018provable, raghunathan2018certified}. These defenses can certify that a classifier's output is constant for all points within some distance $\epsilon$ of the input. For an adversarial example $\hat{x}$ for $\clf$, the certification always fails and thus the constructed detector $\fdet$ will reject $\hat{x}$. If $\clf$ is robust and the certification succeeds, the detector $\fdet$ copies the output of $\clf$. However, a certified defense may fail to certify a robust input (a false negative), and thus the detector $\fdet$ may reject more inputs than with the construction in \Cref{thm:cls_to_det}. This reduction from a certified classifier to a detector is implicit in~\citep[Section 3.1]{wong2018provable}.

\subsection{On Computational Complexity and Sample Complexity}
In statistical learning theory, it is customary to consider two types of \emph{complexity} for a learning algorithm: its \emph{sample complexity}, and its \emph{computational complexity}.

The sample complexity of a learning algorithm corresponds to the number of training samples that are necessary for attaining a given test error. In turn, the computational complexity represents the runtime of the learning algorithm.

Many classical results in statistical learning theory---such as standard generalization bounds---are only concerned with the sample complexity of a learning problem, and make no explicit assumptions on a learner's computational complexity.

Note that our reductions in \Cref{thm:det_to_cls} and \Cref{thm:cls_to_det} preserve the sample complexity of the original detector or classifier that the reduction is applied to. Indeed, the reduction uses a pre-trained detector/classifier as a black-box and only modifies the algorithm's inference phase.
Thus, from a sample complexity perspective, our results show that robust classification and detection are equally hard (up to a factor 2 in the perturbation bound).

\section{What Are Detection Defenses Claiming?}
\label{sec:experiments}


We now survey 14 detection defenses, and consider the robust \emph{classification} performance that these defenses implicitly claim (via \Cref{thm:det_to_cls}). As we will see, in 12/14 cases, the defenses' detection results imply a computationally inefficient classifier with far better robust accuracy than the state-of-the-art. 

Before presenting our experimental setup and the explicit results from the reduction, we first discuss how we believe these results should be interpreted.

\subsection{Interpreting our Reduction}

Suppose that some detector defense claims a robust accuracy that implies---via our reduction---an inefficient classifier with much higher robustness than the state-of-the-art (e.g., the defense A described in the introduction of this paper).

A first possible interpretation of our reduction is that this robust detector implies the \emph{existence} of a robust classifier. This interpretation is rather weak however, since it is typically presumed that robust classification is possible, and that human perception gives an existence proof of a robust classifier. The mere existence of a robust classifier is thus typically already assumed to be true.

\emph{Our reduction yields a stronger result. It gives an \textbf{explicit construction} of a sample efficient but computationally inefficient robust classifier from a robust detector.}

The question then is whether we should expect computationally inefficient robust classifiers to be easier to construct than efficient ones (at a fixed sample complexity). That is, do we expect that we can leverage computational inefficiency to build more robust classifiers than the current state-of-the-art, without having to collect additional training data?

We do not know of an answer to this question, but there is evidence to suggest that the answer may be negative.\footnote{Some works show that for certain ``unnatural'' distributions, computational inefficiency is necessary to build robust classifiers \citep{garg2020adversarially, bubeck2018adversarial}. Yet, since we presume that the human perceptual system \emph{is} robust to small perturbations on natural data (e.g., such as CIFAR-10), there must exist some efficient natural process to achieve robustness on such data.}
For example, the work of \citet{schmidt2018adversarially} proves that for a synthetic classification task between Gaussian distributions, building more robust classifiers requires additional \emph{data}---regardless of the amount of computation power.
Their results are corroborated by current state-of-the-art robust classifiers based on adversarial training~\citep{madry2018towards}, which do not appear to be limited by computational constraints. On CIFAR-10 for example, adversarial training achieves 100\% robust \emph{training} accuracy~\citep{schmidt2018adversarially}. 
Thus, it is unclear how computational inefficiency could be leveraged to build more robust classifiers using existing techniques.

Candidate approaches could be to train much larger models (e.g., with an exponential number of parameters), or to perform an exhaustive architecture search to find more robust models.
Yet, note that the robust classifier constructed in our reduction only uses its unbounded computational power at \emph{inference time}. That is, the classifier that is built in \Cref{thm:det_to_cls} uses a pre-trained detector model as a black-box subroutine (which is presumed to be efficient), and then solves a non-convex optimization problem at inference time.
The classifier built in our reduction is thus presumably weaker than a robust classifier that can be \emph{trained} with unbounded computational power.

To summarize, \textbf{when a detector defense claims a high robust accuracy, this implies the existence of a \emph{concretely instantiatable, sample efficient robust classifier with an inefficient inference procedure} with the same robust accuracy.}
 
If this inefficient classifier is much more robust than the current state-of-the-art, it thus means that either: (1) the detector defense inadvertently made a huge breakthrough in robust \emph{classification}; or (2) the detector defense's claims are incorrect...

Our results do not imply that the latter option is necessarily the right one. But given how challenging robust classification is proving to be, we have reason to be skeptical of such a major breakthrough (even for inefficient classifiers). To compound this, many proposed detection defenses are quite \emph{simple},\footnote{This simplicity is sometimes justified with the assumption that detecting adversarial examples is a much easier problem than classifying them. As we show, this assumption is flawed.} and reject adversarial inputs based on some standard statistical test over a neural network's features. It would thus be particularly surprising if such simple techniques could yield robust \emph{classifiers}, given that most ``simple'' approaches to adversarial robustness (denoising, compression, randomness, etc.) are known to be ineffective~\citep{he2017adversarial}. It is thus not too surprising that a number of the detector defenses that we survey hereafter have later been broken by stronger attacks~\citep{carlini2017adversarial, tramer2020adaptive}. Our reduction would have suggested that such a break was likely to happen.

Of course, one could always take the optimistic interpretation that the detector defense's claims \emph{are correct}, and that the defense does in fact imply a huge breakthrough in robust classification.
We then argue that detector defenses which claim such high robustness should also explicitly claim a breakthrough in robust classification (since our theorem proves that these claims are equivalent). In turn, these defenses should then be met with the scientific skepticism and scrutiny that such a breakthrough claim naturally entails.

\subsection{Experimental Setup}

We choose 14 detector defenses from the literature (see \Cref{tab:results}). Our selection of these defenses was partially motivated by a pragmatic consideration on the easiness of translating the defenses' claims into a bound on the robust risk with detection $R_{\text{adv-det}}^\epsilon$.
Indeed, some defenses simply report a single AUC score for the detector's performance, from which we cannot derive a useful bound on the robust risk.
We thus focus on defenses that either directly report a robust error akin to \Cref{def:risk_det}, or that provide concrete pairs of false-positive and false-negative rates (e.g., a full ROC curve). 
In the latter case, we compute a ``best-effort'' bound on the robust risk with detection\footnote{Many detector defenses report performance against a set of \emph{fixed} (non-adaptive) attacks. We interpret these results as being an approximation of the worst-case risk.}  as:
\begin{equation}
	\label{eq:bound}
R_{\text{adv-det}}^\epsilon(\fdet) \leq \text{FPR} + \text{FNR} + R(\fdet) \;,
\end{equation}

where FPR and FNR are the detector's false-positive and false-negative rates for a fixed detection threshold, and $R(\fdet)$ is the defense's standard risk (i.e., the test error on natural examples).

The above union bound in \Cref{eq:bound} is quite pessimistic, as we may over-count examples that lead to multiple sources of errors (e.g., a natural input that is misclassified and erroneously detected).
The true robustness claim made by these detector defenses might thus be stronger than what we obtain from our bound (note that this only strengthens our claims).
We encourage future detection papers to report their adversarial risk with detection, $R_{\text{adv-det}}^\epsilon$, to facilitate direct comparisons with robust classifiers using our reduction.

\begin{table*}[t]
	\centering
	\caption{For each detector defense, we compute a (best-effort) bound on the claimed robust risk with detection $R_{\text{adv-det}}^\epsilon$ using \Cref{eq:bound}, and report the complement (the robust accuracy with detection), $1-R_{\text{adv-det}}^\epsilon$.  For each detector's robustness claim (at distance $\epsilon$), we report the state-of-the-art robust classification accuracy  for attacks at distance $\epsilon/2$, denoted $1-R_{\text{adv}}^{\epsilon/2}$.
		Detection defense claims that imply a higher robust classification accuracy than the current state-of-the-art are {\color{red} highlighted in red}.}
	\vspace{1em}
	\begin{tabular}{@{} l l l r c H c@{}}
		\textbf{Dataset} & \textbf{Defense} & \textbf{Norm} & $\bm{\epsilon}$ & $\bm{1-R_{\textbf{adv-det}}^\epsilon}$ & $\bm{\epsilon/2}$  
		& $\bm{1-R_{\textbf{adv}}^{\epsilon/2}}$ \\
		\toprule
		\multirow{3.5}{5em}{MNIST} 
		& \citet{grosse2017statistical} & $\ell_\infty$ & $0.5$ & \color{red} $\geq 98\%$ & $0.25$ & $94\%$ \\
		\cmidrule{2-7}
		& \citet{ma2018characterizing} & $\ell_2$ & $4.2$ & \color{red} $\geq 99\%$ & $2.1$ & $72\%$ \\
		& \citet{raghuram2021general} & $\ell_2$ & $8.9$ & \color{red} $\geq 74\%$ & $4.5$ & $0\%$ \\
		\midrule
		\multirow{8.5}{5em}{CIFAR-10} 
		& \citet{Yin2020GAT} &  $\ell_2$ & $1.7$ & \color{red} $\geq90\%$ & $0.85$ & $66\%$ \\
		& \citet{feinman2017detecting} & $\ell_2$ & $2.7$ & \color{red}  $\geq43\%$ & $1.35$ & $36\%$ \\
		& \citet{miller2019not} &  $\ell_2$ & $2.9$ & \color{red} $\geq75\%$ & $1.45$ & $30\%$ \\
		& \citet{raghuram2021general} &  $\ell_2$ & $4.0$ & \color{red} $\geq56\%$ & $2.0$ & $10\%$ \\
		\cmidrule{2-7}
		& \citet{ma2019nic} &  $\ell_\infty$ & $\sfrac{4}{255}$ & \color{red} $\geq96\%$ & $\sfrac{2}{255}$ & $85\%$ \\
		& \citet{roth2019odds} & $\ell_\infty$ & $\sfrac{8}{255}$ & $\geq66\%$ & $\sfrac{4}{255}$ & $79\%$ \\
		& \citet{lee2018simple} &  $\ell_\infty$ & $\sfrac{20}{255}$ & \color{red} $\geq81\%$ & $\sfrac{10}{255}$ & $59\%$ \\
		& \citet{li2019generative} &  $\ell_\infty$ & $\sfrac{26}{255}$ & \color{red} $\geq80\%$ & $\sfrac{13}{255}$ & $44\%$ \\
		\midrule
		\multirow{5.5}{5em}{ImageNet} 
		& \citet{xu2017feature} &  $\ell_2$ & $1.0$ & \color{red} $\geq67\%$ & $0.5$ & $54\%$ \\
		\cmidrule{2-7}
		& \citet{ma2019nic} &  $\ell_\infty$ & $\sfrac{2}{255}$  & \color{red} $\geq68\%$ & $\sfrac{1}{255}$ & $55\%$ \\
		& \citet{jha2019attribution} &  $\ell_\infty$ & $\sfrac{2}{255}$  &  $\geq30\%$ & $\sfrac{1}{255}$ & $55\%$ \\
		& \citet{hendrycks2016early} & $\ell_\infty$ & $\sfrac{10}{255}$ & \color{red} $\geq76\%$ & $\sfrac{5}{255}$ & $30\%$ \\
		& \citet{yu2019new} &  $\ell_\infty$ & $\sfrac{26}{255}$ & \color{red} $\geq\hphantom{0}7\%$ & $\sfrac{13}{255}$ & $\hphantom{0}5\%$ \\
		\bottomrule	
	\end{tabular}
	\label{tab:results}
\end{table*}

The 14 detector defenses use three datasets: MNIST, CIFAR-10 and ImageNet, and consider adversarial examples under the $\ell_\infty$ or $\ell_2$ norms. Given a claim of robust detection at distance $\epsilon$, we contrast it to a state-of-the-art robust classification result for distance $\epsilon/2$:

\begin{itemize}
\item On MNIST with $\ell_\infty$ attacks, we use the adversarially-trained TRADES classifier~\citep{zhang2019theoretically} and measure robust error with the Square attack~\citep{andriushchenko2020square}.
\item On MNIST with $\ell_2$ attacks, we use the adversarially-trained classifier from~\citet{tramer2019adversarial} and measure robust error with PGD~\citep{madry2018towards}.
\item On CIFAR-10, for both $\ell_\infty$ and $\ell_2$ attacks we use the adversarially-trained classifier of~\citet{rebuffi2021fixing} (trained without external data), and attack it using the APGD-CE attack from AutoAttack~\citep{croce2020reliable}.
\item For ImageNet, for both $\ell_\infty$ and $\ell_2$ attacks we use adversarially-trained classifiers and PGD attacks from~\citet{robustness}.
\end{itemize}

We also consider two \emph{certified} defenses for $\ell_\infty$ attacks on CIFAR-10: the robust classifier
of~\citet{zhang2019towards}, and a recent certified detector of~\citet{sheikholeslami2021provably}.

\subsection{Results}

As we can see from \Cref{tab:results},  most defenses claim a detection performance that implies a far greater robust accuracy than our current best robust classifiers.
To illustrate with a concrete example, the CIFAR-10 detector of \citet{miller2019not} claims to achieve robust accuracy with detection of $75\%$ for $\ell_2$ attacks with $\epsilon=2.9$. Using \Cref{thm:det_to_cls}, this implies an inefficient classifier with robust accuracy of $75\%$ for $\ell_2$ attacks with $\epsilon=\sfrac{2.9}{2}=1.45$. Yet, the current state-of-the-art robust accuracy for such a perturbation budget is only $30\%$~\citep{rebuffi2021fixing}.
If this detector defense's robustness claim were correct, it would imply a remarkable breakthrough in robust classification!

Why do many of these defenses claim robust accuracies that appear far beyond the current state-of-the-art? A primary reason is that the vast majority of the above detector defenses do not consider evaluations against \emph{adaptive attacks}~\citep{carlini2019evaluating, athalye2018obfuscated, tramer2020adaptive}. That is, these defenses show that they can detect \emph{some fixed attacks}, and thereafter conclude that the detector is robust against \emph{all attacks}. As in the case of robust classifiers, such an evaluation is clearly insufficient.
Some defenses do evaluate against adaptive adversaries, but likely fail to build a sufficiently strong attack to reliably approximate the worst-case robust risk. Because of the lack of a strong comparative baseline, it is not always immediately clear that these results are overly strong.

For example, the recent work of \citet[ICML Long Talk]{raghuram2021general}  builds a detector on MNIST with a FNR of $\leq 5\%$ at a FPR of $\leq 20\%$, for \emph{adaptive} $\ell_2$ attacks bounded by $\epsilon=8.9$. Yet, this perturbation bound is much larger than the average distance between an MNIST image and the nearest image from a different class! Thus, an attack within this perturbation bound can trivially reduce the detector's accuracy to chance.
On CIFAR-10, the same detector achieves $95\%$ clean accuracy, and a FNR of $\leq 19\%$  at a FPR of $\leq 20\%$ for \emph{adaptive} $\ell_2$ attacks bounded by $\epsilon=4$. Using \Cref{eq:bound}, this yields a bound on the robust accuracy with detection of $1-R_{\text{adv-det}}^\epsilon(\fdet) \geq 1-(5\% + 19\% + 20\%) = 56\%$. In contrast, the best robust classifier we are aware of for $\ell_2$ attacks bounded by $\epsilon=2$ achieves robust accuracy of only $10\%$~\citep{rebuffi2021fixing}.
In summary, the adaptive attack considered in this detector defense's evaluation is highly unlikely to be a good approximation of a worst-case attack, and this defense can probably be broken by stronger attacks (or if not, the defense should deservedly claim to have made a huge breakthrough in robust classification!).

\paragraph{Certified classifiers and detectors.}
In \Cref{tab:certified}, we look at the robust accuracy with detection and standard robust accuracy achieved by \emph{certified} defenses (for which the claimed robustness numbers are necessarily mathematically correct). 

We note that our reduction is not as meaningful in the case of certified defenses, since it is highly plausible that computational inefficiency \emph{can} be leveraged to build better certified classifiers.
Indeed, given any robust classifier (e.g., an adversarially trained model), the classifier's robustness can always be certified inefficiently (by enumerating over all points within an $\epsilon$-ball).
Thus, the existence of an inefficient classifier with higher \emph{certified} robustness than the state-of-the-art would not be particularly surprising.

Nevertheless, we find that existing results for certified classifiers and detectors perfectly match what is implied by our reduction (up to $\pm 2\%$ error). For example, \citet{zhang2019towards} achieve $39\%$ certified robust accuracy on CIFAR-10 for perturbations of $\ell_\infty$-norm below $\sfrac{4}{255}$. Together with \Cref{thm:cls_to_det}, this implies an inefficient detector with $39\%$ robust detection accuracy for perturbations of $\ell_\infty$-norm below $\sfrac{8}{255}$. The recent work of \citet{sheikholeslami2021provably} nearly matches that bound ($37\%$ robust accuracy with detection), with a defense that has the advantage of being concretely efficient.

These results give additional credence to our thesis: with current techniques, robust classification is indeed approximately twice as hard (in terms of the perturbation bounds covered) than robust detection.

\begin{table}[t]
	\centering
	\renewcommand{\arraystretch}{1}
	\caption{Certified robust accuracy $1-R_{\text{adv}}^{\epsilon/2}$ for the defense of \citet{zhang2019towards}, and certified robust accuracy with detection  $1-R_{\text{adv-det}}^{\epsilon}$ for the defense of \citet{sheikholeslami2021provably}.}
	\vspace{1em}
	\begin{tabular}{@{} c c H c@{}}
		$\bm{\epsilon}$ & $\bm{1-R_{\textbf{adv-det}}^\epsilon}$ & $\bm{\epsilon/2}$  & $\bm{1-R_{\textbf{adv}}^{ \epsilon/2}}$ \\
		\toprule
		$\sfrac{\hphantom{0}8}{255}$ & $37\%$ & $\sfrac{4}{255}$ & $39\%$ \\
		$\sfrac{16}{255}$ & $32\%$ & $\sfrac{8}{255}$ & $33\%$ \\
		\bottomrule
	\end{tabular}
	\label{tab:certified}
\end{table}

\section{Extensions and Open Problems}
The main open problem raised by our work is whether it is possible to show an \emph{efficient} reduction between classification and detection of adversarial examples, but this seems implausible---at least with our minimum distance decoding approach.

Another interesting question is whether a similar reduction can be shown for robustness to less structured perturbations than $\ell_p$ balls and other metric spaces. For example, there has been a line of research on defending against \emph{adversarial patches}~\citep{brown2017adversarial}, using empirical \citep{hayes2018visible, naseer2019local, chou2018sentinet} and certifiable techniques \citep{chiang2020certified, zhang2020clipped, xiang2021patchguard}. To use our result, we would have to define some metric to measure the size of an adversarial patch's perturbation. Yet, the size of a patch is typically defined by the number of contiguously perturbed pixels, which does not define a metric---in particular, it does not satisfy the triangular inequality which is necessary in our proofs.

Finally, similar hardness reductions as ours might exist between other approaches for building robust classifiers. For example, the question of whether (test-time) randomness can be leveraged to build more robust models is also intriguing. Empirical defenses that use randomness can be notoriously hard to evaluate~\citep{athalye2018obfuscated, tramer2020adaptive}, so a reduction similar to ours might be useful in showing that we should not expect such approaches to bare fruit.


\section{Conclusion}

We have shown formal reductions between robust classification with, and without, a detection option.
Our results show that significant progress on one of these two tasks implies similar progress on the other---unless computational inefficiency can somehow be leveraged to build more robust models. 
This raises the question on whether we should spend our efforts on studying both of these tasks, or focus our efforts on a single one.

On one hand, the two tasks represent different ways of tackling a common goal, and working on either task might result in new techniques or ideas that apply to the other task as well.
On the other hand, our reductions show that unless we make progress on both tasks, work on one of the tasks can merely aim to match the robustness of our inefficient constructions, whilst improving their computational complexity.

We believe our reduction will serve as a useful sanity-check when assessing the claims of future detector defenses. Detector defenses' robustness evaluations have received less stringent scrutiny than robust classifiers over the past years, perhaps in part due to a lack of strong comparative baselines. Instead of having to wait until some detector defense's claims pass the test-of-time, we show that detection results can be directly contrasted against long-standing results for robust classification.

When applying this approach to past detector defenses, we find that many make robustness claims that imply significant breakthroughs in robust classification. We believe our reduction could have been useful in highlighting the suspiciously strong claims made by many of these defenses---before they were explicitly broken by stronger attacks.


\section*{Acknowledgments}
Thanks to thank Nicholas Carlini and Wieland Brendel for helpful discussions, as well as to Alex Ozdemir for suggesting the connection between our results and minimum distance decoding.



\bibliography{biblio}

\begin{thebibliography}{49}
\providecommand{\natexlab}[1]{#1}
\providecommand{\url}[1]{\texttt{#1}}
\expandafter\ifx\csname urlstyle\endcsname\relax
  \providecommand{\doi}[1]{doi: #1}\else
  \providecommand{\doi}{doi: \begingroup \urlstyle{rm}\Url}\fi

\bibitem[Aaronson(2008)]{aaronson}
Aaronson, S.
\newblock Ten signs a claimed mathematical breakthrough is wrong.
\newblock \url{https://www.scottaaronson.com/blog/?p=304}, Jan 2008.

\bibitem[Andriushchenko et~al.(2020)Andriushchenko, Croce, Flammarion, and
  Hein]{andriushchenko2020square}
Andriushchenko, M., Croce, F., Flammarion, N., and Hein, M.
\newblock Square attack: a query-efficient black-box adversarial attack via
  random search.
\newblock In \emph{European Conference on Computer Vision}, 2020.

\bibitem[Athalye et~al.(2018)Athalye, Carlini, and
  Wagner]{athalye2018obfuscated}
Athalye, A., Carlini, N., and Wagner, D.
\newblock Obfuscated gradients give a false sense of security: Circumventing
  defenses to adversarial examples.
\newblock In \emph{International Conference on Machine Learning}, 2018.

\bibitem[Biggio et~al.(2013)Biggio, Corona, Maiorca, Nelson, {\v{S}}rndi{\'c},
  Laskov, Giacinto, and Roli]{biggio2013evasion}
Biggio, B., Corona, I., Maiorca, D., Nelson, B., {\v{S}}rndi{\'c}, N., Laskov,
  P., Giacinto, G., and Roli, F.
\newblock Evasion attacks against machine learning at test time.
\newblock In \emph{European Conference on Machine Learning and Knowledge
  Discovery in Databases}, pp.\  387--402. Springer, 2013.

\bibitem[Brown et~al.(2017)Brown, Man{\'e}, Roy, Abadi, and
  Gilmer]{brown2017adversarial}
Brown, T.~B., Man{\'e}, D., Roy, A., Abadi, M., and Gilmer, J.
\newblock Adversarial patch.
\newblock \emph{arXiv preprint arXiv:1712.09665}, 2017.

\bibitem[Bubeck et~al.(2018)Bubeck, Lee, Price, and
  Razenshteyn]{bubeck2018adversarial}
Bubeck, S., Lee, Y.~T., Price, E., and Razenshteyn, I.
\newblock Adversarial examples from cryptographic pseudo-random generators.
\newblock \emph{arXiv preprint arXiv:1811.06418}, 2018.

\bibitem[Carlini \& Wagner(2017)Carlini and Wagner]{carlini2017adversarial}
Carlini, N. and Wagner, D.
\newblock Adversarial examples are not easily detected: Bypassing ten detection
  methods.
\newblock In \emph{AISec}, pp.\  3--14. ACM, 2017.

\bibitem[Carlini et~al.(2019)Carlini, Athalye, Papernot, Brendel, Rauber,
  Tsipras, Goodfellow, and Madry]{carlini2019evaluating}
Carlini, N., Athalye, A., Papernot, N., Brendel, W., Rauber, J., Tsipras, D.,
  Goodfellow, I., and Madry, A.
\newblock On evaluating adversarial robustness.
\newblock \emph{arXiv preprint arXiv:1902.06705}, 2019.

\bibitem[Carmon et~al.(2019)Carmon, Raghunathan, Schmidt, Duchi, and
  Liang]{carmon2019unlabeled}
Carmon, Y., Raghunathan, A., Schmidt, L., Duchi, J.~C., and Liang, P.~S.
\newblock Unlabeled data improves adversarial robustness.
\newblock In \emph{Advances in Neural Information Processing Systems}, pp.\
  11192--11203, 2019.

\bibitem[Chiang et~al.(2020)Chiang, Ni, Abdelkader, Zhu, Studer, and
  Goldstein]{chiang2020certified}
Chiang, P.-y., Ni, R., Abdelkader, A., Zhu, C., Studer, C., and Goldstein, T.
\newblock Certified defenses for adversarial patches.
\newblock \emph{arXiv preprint arXiv:2003.06693}, 2020.

\bibitem[Chou et~al.(2020)Chou, Tram{\`e}r, and Pellegrino]{chou2018sentinet}
Chou, E., Tram{\`e}r, F., and Pellegrino, G.
\newblock Sentinet: Detecting physical attacks against deep learning systems.
\newblock In \emph{Workshop on Deep Learning Security}, 2020.

\bibitem[Croce \& Hein(2020)Croce and Hein]{croce2020reliable}
Croce, F. and Hein, M.
\newblock Reliable evaluation of adversarial robustness with an ensemble of
  diverse parameter-free attacks.
\newblock In \emph{ICML}, 2020.

\bibitem[Croce et~al.(2020)Croce, Andriushchenko, Sehwag, Flammarion, Chiang,
  Mittal, and Hein]{croce2020robustbench}
Croce, F., Andriushchenko, M., Sehwag, V., Flammarion, N., Chiang, M., Mittal,
  P., and Hein, M.
\newblock Robustbench: a standardized adversarial robustness benchmark.
\newblock \emph{arXiv preprint arXiv:2010.09670}, 2020.

\bibitem[Engstrom et~al.(2019)Engstrom, Ilyas, Salman, Santurkar, and
  Tsipras]{robustness}
Engstrom, L., Ilyas, A., Salman, H., Santurkar, S., and Tsipras, D.
\newblock Robustness (python library), 2019.
\newblock URL \url{https://github.com/MadryLab/robustness}.

\bibitem[Feinman et~al.(2017)Feinman, Curtin, Shintre, and
  Gardner]{feinman2017detecting}
Feinman, R., Curtin, R.~R., Shintre, S., and Gardner, A.~B.
\newblock Detecting adversarial samples from artifacts.
\newblock \emph{arXiv preprint arXiv:1703.00410}, 2017.

\bibitem[Garg et~al.(2020)Garg, Jha, Mahloujifar, and
  Mohammad]{garg2020adversarially}
Garg, S., Jha, S., Mahloujifar, S., and Mohammad, M.
\newblock Adversarially robust learning could leverage computational hardness.
\newblock In \emph{Algorithmic Learning Theory}, pp.\  364--385. PMLR, 2020.

\bibitem[Grosse et~al.(2017)Grosse, Manoharan, Papernot, Backes, and
  McDaniel]{grosse2017statistical}
Grosse, K., Manoharan, P., Papernot, N., Backes, M., and McDaniel, P.
\newblock On the (statistical) detection of adversarial examples.
\newblock \emph{arXiv preprint arXiv:1702.06280}, 2017.

\bibitem[Hayes(2018)]{hayes2018visible}
Hayes, J.
\newblock On visible adversarial perturbations \& digital watermarking.
\newblock In \emph{Proceedings of the IEEE Conference on Computer Vision and
  Pattern Recognition Workshops}, pp.\  1597--1604, 2018.

\bibitem[He et~al.(2017)He, Wei, Chen, Carlini, and Song]{he2017adversarial}
He, W., Wei, J., Chen, X., Carlini, N., and Song, D.
\newblock Adversarial example defenses: Ensembles of weak defenses are not
  strong.
\newblock In \emph{USENIX Workshop on Offensive Technologies}, 2017.

\bibitem[Hendrycks \& Gimpel(2017)Hendrycks and Gimpel]{hendrycks2016early}
Hendrycks, D. and Gimpel, K.
\newblock Early methods for detecting adversarial images.
\newblock In \emph{International Conference on Learning Representations}, 2017.

\bibitem[Jha et~al.(2019)Jha, Raj, Fernandes, Jha, Jha, Verma, Jalaian, and
  Swami]{jha2019attribution}
Jha, S., Raj, S., Fernandes, S.~L., Jha, S.~K., Jha, S., Verma, G., Jalaian,
  B., and Swami, A.
\newblock Attribution-driven causal analysis for detection of adversarial
  examples.
\newblock \emph{arXiv preprint arXiv:1903.05821}, 2019.

\bibitem[Lee et~al.(2018)Lee, Lee, Lee, and Shin]{lee2018simple}
Lee, K., Lee, K., Lee, H., and Shin, J.
\newblock A simple unified framework for detecting out-of-distribution samples
  and adversarial attacks.
\newblock In \emph{Advances in Neural Information Processing Systems}, 2018.

\bibitem[Li et~al.(2019)Li, Bradshaw, and Sharma]{li2019generative}
Li, Y., Bradshaw, J., and Sharma, Y.
\newblock Are generative classifiers more robust to adversarial attacks?
\newblock In \emph{International Conference on Machine Learning}, pp.\
  3804--3814. PMLR, 2019.

\bibitem[Ma \& Liu(2019)Ma and Liu]{ma2019nic}
Ma, S. and Liu, Y.
\newblock Nic: Detecting adversarial samples with neural network invariant
  checking.
\newblock In \emph{Proceedings of the 26th Network and Distributed System
  Security Symposium (NDSS 2019)}, 2019.

\bibitem[Ma et~al.(2018)Ma, Li, Wang, Erfani, Wijewickrema, Schoenebeck, Song,
  Houle, and Bailey]{ma2018characterizing}
Ma, X., Li, B., Wang, Y., Erfani, S.~M., Wijewickrema, S., Schoenebeck, G.,
  Song, D., Houle, M.~E., and Bailey, J.
\newblock Characterizing adversarial subspaces using local intrinsic
  dimensionality.
\newblock In \emph{International Conference on Learning Representations}, 2018.

\bibitem[Madry et~al.(2018)Madry, Makelov, Schmidt, Tsipras, and
  Vladu]{madry2018towards}
Madry, A., Makelov, A., Schmidt, L., Tsipras, D., and Vladu, A.
\newblock Towards deep learning models resistant to adversarial attacks.
\newblock In \emph{International Conference on Learning Representations}, 2018.

\bibitem[Miller et~al.(2019)Miller, Wang, and Kesidis]{miller2019not}
Miller, D., Wang, Y., and Kesidis, G.
\newblock When not to classify: Anomaly detection of attacks (ada) on dnn
  classifiers at test time.
\newblock \emph{Neural computation}, 31\penalty0 (8):\penalty0 1624--1670,
  2019.

\bibitem[Naseer et~al.(2019)Naseer, Khan, and Porikli]{naseer2019local}
Naseer, M., Khan, S., and Porikli, F.
\newblock Local gradients smoothing: Defense against localized adversarial
  attacks.
\newblock In \emph{2019 IEEE Winter Conference on Applications of Computer
  Vision (WACV)}, pp.\  1300--1307. IEEE, 2019.

\bibitem[Pang et~al.(2021)Pang, Zhang, He, Dong, Su, Chen, Zhu, and
  Liu]{pang2021adversarial}
Pang, T., Zhang, H., He, D., Dong, Y., Su, H., Chen, W., Zhu, J., and Liu,
  T.-Y.
\newblock Adversarial training with rectified rejection.
\newblock \emph{arXiv preprint arXiv:2105.14785}, 2021.

\bibitem[Raghunathan et~al.(2018)Raghunathan, Steinhardt, and
  Liang]{raghunathan2018certified}
Raghunathan, A., Steinhardt, J., and Liang, P.
\newblock Certified defenses against adversarial examples.
\newblock In \emph{International Conference on Learning Representations}, 2018.

\bibitem[Raghuram et~al.(2021)Raghuram, Chandrasekaran, Jha, and
  Banerjee]{raghuram2021general}
Raghuram, J., Chandrasekaran, V., Jha, S., and Banerjee, S.
\newblock A general framework for detecting anomalous inputs to dnn
  classifiers.
\newblock In \emph{International Conference on Machine Learning}, 2021.

\bibitem[Rebuffi et~al.(2021)Rebuffi, Gowal, Calian, Stimberg, Wiles, and
  Mann]{rebuffi2021fixing}
Rebuffi, S.-A., Gowal, S., Calian, D.~A., Stimberg, F., Wiles, O., and Mann, T.
\newblock Fixing data augmentation to improve adversarial robustness.
\newblock \emph{arXiv preprint arXiv:2103.01946}, 2021.

\bibitem[Roth et~al.(2019)Roth, Kilcher, and Hofmann]{roth2019odds}
Roth, K., Kilcher, Y., and Hofmann, T.
\newblock The odds are odd: A statistical test for detecting adversarial
  examples.
\newblock In \emph{International Conference on Machine Learning}, 2019.

\bibitem[Schmidt et~al.(2018)Schmidt, Santurkar, Tsipras, Talwar, and
  Madry]{schmidt2018adversarially}
Schmidt, L., Santurkar, S., Tsipras, D., Talwar, K., and Madry, A.
\newblock Adversarially robust generalization requires more data.
\newblock In \emph{Advances In Neural Information Processing Systems}, pp.\
  5019--5031, 2018.

\bibitem[Sheikholeslami et~al.(2021)Sheikholeslami, Lotfi, and
  Kolter]{sheikholeslami2021provably}
Sheikholeslami, F., Lotfi, A., and Kolter, J.~Z.
\newblock Provably robust classification of adversarial examples with
  detection.
\newblock In \emph{International Conference on Learning Representations}, 2021.

\bibitem[Szegedy et~al.(2014)Szegedy, Zaremba, Sutskever, Bruna, Erhan,
  Goodfellow, and Fergus]{szegedy2013intriguing}
Szegedy, C., Zaremba, W., Sutskever, I., Bruna, J., Erhan, D., Goodfellow, I.,
  and Fergus, R.
\newblock Intriguing properties of neural networks.
\newblock In \emph{International Conference on Learning Representations}, 2014.

\bibitem[Tram{\`e}r \& Boneh(2019)Tram{\`e}r and Boneh]{tramer2019adversarial}
Tram{\`e}r, F. and Boneh, D.
\newblock Adversarial training and robustness for multiple perturbations.
\newblock In \emph{Advances In Neural Information Processing Systems}, 2019.

\bibitem[Tram{\`e}r et~al.(2020)Tram{\`e}r, Carlini, Brendel, and
  Madry]{tramer2020adaptive}
Tram{\`e}r, F., Carlini, N., Brendel, W., and Madry, A.
\newblock On adaptive attacks to adversarial example defenses.
\newblock In \emph{Conference on Neural Information Processing Systems}, 2020.

\bibitem[Tsipras et~al.(2019)Tsipras, Santurkar, Engstrom, Turner, and
  Madry]{tsipras2019robustness}
Tsipras, D., Santurkar, S., Engstrom, L., Turner, A., and Madry, A.
\newblock Robustness may be at odds with accuracy.
\newblock In \emph{International Conference on Learning Representations}, 2019.

\bibitem[Uesato et~al.(2019)Uesato, Alayrac, Huang, Stanforth, Fawzi, and
  Kohli]{uesato2019labels}
Uesato, J., Alayrac, J.-B., Huang, P.-S., Stanforth, R., Fawzi, A., and Kohli,
  P.
\newblock Are labels required for improving adversarial robustness?
\newblock \emph{arXiv preprint arXiv:1905.13725}, 2019.

\bibitem[Wong \& Kolter(2018)Wong and Kolter]{wong2018provable}
Wong, E. and Kolter, Z.
\newblock Provable defenses against adversarial examples via the convex outer
  adversarial polytope.
\newblock In \emph{International Conference on Machine Learning}, pp.\
  5283--5292, 2018.

\bibitem[Xiang et~al.(2021)Xiang, Bhagoji, Sehwag, and
  Mittal]{xiang2021patchguard}
Xiang, C., Bhagoji, A.~N., Sehwag, V., and Mittal, P.
\newblock Patchguard: A provably robust defense against adversarial patches via
  small receptive fields and masking.
\newblock In \emph{30th $\{$USENIX$\}$ Security Symposium ($\{$USENIX$\}$
  Security 21)}, 2021.

\bibitem[Xu et~al.(2018)Xu, Evans, and Qi]{xu2017feature}
Xu, W., Evans, D., and Qi, Y.
\newblock Feature squeezing: Detecting adversarial examples in deep neural
  networks.
\newblock In \emph{Network and Distributed System Security Symposium}, 2018.

\bibitem[Yin et~al.(2020)Yin, Kolouri, and Rohde]{Yin2020GAT}
Yin, X., Kolouri, S., and Rohde, G.~K.
\newblock Gat: Generative adversarial training for adversarial example
  detection and robust classification.
\newblock In \emph{International Conference on Learning Representations}, 2020.

\bibitem[Yu et~al.(2019)Yu, Hu, Guo, Chao, and Weinberger]{yu2019new}
Yu, T., Hu, S., Guo, C., Chao, W.-L., and Weinberger, K.~Q.
\newblock A new defense against adversarial images: Turning a weakness into a
  strength.
\newblock In \emph{Advances in Neural Information Processing Systems}, 2019.

\bibitem[Zhai et~al.(2019)Zhai, Cai, He, Dan, He, Hopcroft, and
  Wang]{zhai2019adversarially}
Zhai, R., Cai, T., He, D., Dan, C., He, K., Hopcroft, J., and Wang, L.
\newblock Adversarially robust generalization just requires more unlabeled
  data.
\newblock \emph{arXiv preprint arXiv:1906.00555}, 2019.

\bibitem[Zhang et~al.(2019)Zhang, Yu, Jiao, Xing, Ghaoui, and
  Jordan]{zhang2019theoretically}
Zhang, H., Yu, Y., Jiao, J., Xing, E.~P., Ghaoui, L.~E., and Jordan, M.~I.
\newblock Theoretically principled trade-off between robustness and accuracy.
\newblock In \emph{International Conference on Machine Learning}, 2019.

\bibitem[Zhang et~al.(2020{\natexlab{a}})Zhang, Chen, Xiao, Li, Boning, and
  Hsieh]{zhang2019towards}
Zhang, H., Chen, H., Xiao, C., Li, B., Boning, D., and Hsieh, C.-J.
\newblock Towards stable and efficient training of verifiably robust neural
  networks.
\newblock In \emph{International Conference on Learning Representations},
  2020{\natexlab{a}}.

\bibitem[Zhang et~al.(2020{\natexlab{b}})Zhang, Yuan, McCoyd, and
  Wagner]{zhang2020clipped}
Zhang, Z., Yuan, B., McCoyd, M., and Wagner, D.
\newblock Clipped bagnet: Defending against sticker attacks with clipped
  bag-of-features.
\newblock In \emph{2020 IEEE Security and Privacy Workshops (SPW)}, pp.\
  55--61. IEEE, 2020{\natexlab{b}}.

\end{thebibliography}
\bibliographystyle{icml2022}

\end{document}